\documentclass[submission,copyright,creativecommons]{eptcs}

\usepackage{iftex}

\ifpdf
  \usepackage{underscore}         
  \usepackage[T1]{fontenc}        
\else
  \usepackage{breakurl}           
\fi

\usepackage{times}
\usepackage{helvet}
\usepackage{courier}
\usepackage{hyperref}
\usepackage{amsfonts}
\frenchspacing
\setlength{\pdfpagewidth}{8.5in}
\setlength{\pdfpageheight}{11in}
\bibliographystyle{eptcs}

\pdfinfo{
/Title (This and That)
}
\setcounter{secnumdepth}{0}  
\usepackage{amsthm}
\usepackage{amsmath}
\usepackage{graphicx}

\newtheorem{thm}{Theorem}

\newtheorem{cor}[thm]{Corollary}
\newtheorem{lem}[thm]{Lemma}

\newtheorem{conv}[thm]{Convention}
\newtheorem{prop}[thm]{Proposition}

\theoremstyle{remark}
\newtheorem{remark}[thm]{Remark}

\theoremstyle{definition}
\newtheorem{defn}[thm]{Definition}
\DeclareMathOperator{\sm}{softmax}

\title{On the Potential of CLIP for Compositional Logical Reasoning}

\author{Justin Brody
  \institute{Franklin and Marshall College\\ Lancaster, PA, USA}
  \email{justin.brody@fandm.edu}
}

\newcommand{\titlerunning}{Compositional Reasoning in CLIP}
\newcommand{\authorrunning}{J. Brody}
\hypersetup{
  bookmarksnumbered,
  pdftitle    = {\titlerunning},
  pdfauthor   = {\authorrunning},
  pdfsubject  = {EPTCS},               
}

\begin{document}
\maketitle
\begin{abstract}
  In this paper we explore the possibility of using OpenAI's CLIP to
  perform logically coherent grounded visual reasoning.  To that end,
  we formalize our terms and give a geometric analysis of how
  embeddings in CLIP's latent space would need to be configured in
  order for the system to be logically coherent.  Our main conclusion
  is that, as usually configured, CLIP cannot perform such reasoning.
\end{abstract}

\def\i{\mathbb{I}}
\def\c{\mathbb{C}}
\def\D{\mathcal{D}}
\def\A{\mathcal{A}}
\def\R{\mathbf{R}}
\def\cmodels{\models_{\scriptscriptstyle{C}}}
\def\cproves{\vdash_{\scriptscriptstyle{C}}}
\def\cneg{\neg_{\scriptscriptstyle{C}\,}}
\def\clor{\lor_{\scriptscriptstyle{C}}}
\def\cland{\land_{\scriptscriptstyle{C}}}

\def\L{\mathbf{L}}

\def\I{\mathcal{I}}

\section{Introduction}
Recent work in machine learning has seen stunning results in combining
generative language models with vision models.  By tying vision and
language, such models are capable of generating images from
descriptions \cite{ramesh2022hierarchical},
\cite{rombach2022highresolution} or engaging in sophisticated visual
question answering \cite{antol2015vqa} and other forms of {\it visual
  reasoning}.  One striking feature of models trained on the latter
class is the reasoning they do tends to produce new inferences via the
generative mechanisms of the language model rather than through any
kind of formal logical reasoning.  However, logical reasoning exhibits
a number of desirable properties, including the known soundness and
completeness of first-order logic.  These in turn constitute a
particular form of systemic {\it compositionality} -- the capacity
noted in symoblic engines to systematically combine symbols to produce
more complex representations \cite{sep-compositionality}.  Indeed, in
old debates between symbolic and connectionist approaches to
artificial intelligence, a frequent critique made by proponents of
the symbolic approach was that connectionist systems lacked such
compositionality.  To a certain degree, the tremendous success of
large language models and their derivatives has muted this criticism
-- indeed large language models are connectionist systems which easily
recombine symbolic tokens to produce complex sentences (see,
e.g. \cite{vaswani2017attention}).  Further, their obvious power leads
one to wonder whether the systematic compositionality found in
classical AI system might be present in contemporary neural systems as
well.

This in turn leads specifically to the question of whether a more
logic-based kind of reasoning can occur in vision-language models.
This paper presents an analytical study of this possibility in one
such model, OpenAI's CLIP \cite{pmlr-v139-radford21a}, which embeds
sets of images and sets of captions in a common vector space and uses
cosine similarity to measure how well each caption describes each
image.  We are specifically interested in the possiblity of using CLIP
as a basis for grounded, logical reasoning.  By the first, I simply
mean that reasoning about an image should in some way be connected to
the image ({\it grounded} in the image).  
The joint-embedding mechanism of CLIP, along with cosine similiarity,
provides precisely such a mechanism.  Logical reasoning refers to a
way of reasoning about the image that is based in symbolic logic.  As
noted above, this differs from most of what is called {\it visual
  reasoning} in the literature, where large language models (rather
than logic) are usually used to generate implications.  In the
taxonomy Daniel Kahnemahn laid out in {\it Thinking Fast and
  Slow}\cite{kahneman2011thinking}, the logical visual reasoning we
are interested in would be an instance of slow, System 2, reasoning,
whereas the usual literature is (arguably) concerned with System 1.
As such, it loses many of the desirable characteristics
(e.g. soundness and completeness) of logical reasoning.

Our final analysis will show that, as usually configured, CLIP {\bf
  cannot} serve as a basis for grounded logical reasoning about
images.  Our final contributions are thus:
\begin{enumerate}
\item We give a formal analysis of grounded logical reasoning in
  systems like CLIP; and
\item We show that such systems ultimately cannot support such reasoning.
\end{enumerate}

We by no means see this negative result as the end of the story.
Indeed, our formal analysis will point strongly to CLIP's use of
cosine similarity as imposing severe geometric contraints on what a
grounded logical reasoning system based on CLIP would look like.
This points to the desirability of analyzing whether other ways of
measuring similarity between a caption and an image might prove better
suited to building such a system.

\section{CLIP}


We preface our analysis with a summary of the operation of CLIP --
full details are in \cite{pmlr-v139-radford21a}.  Given a set of
images $\{\, I_1, \ldots, I_m \,\}$ and a set of potential captions
$\{\, C_1, \ldots, C_n \,\}$ as input, CLIP will process the two sets
as follows:
  \begin{enumerate}
  \item Each image is passed through a fixed visual feature detector
    (often a vision-transformer) and the detected features are then
    passed into a learned embedding network which maps each $I_i$ to a
    vector $\i_i$ in some finite dimensional vector space $\L = \R^d$.
    Thus CLIP implicitly defines a function $f$ which maps images to
    vectors in $\L$.
  \item Similarly, each caption $C_i$ is passed through a language model
    (e.g GPT2 \cite{radford2019language}) to obtain vector embedding of the caption,
    which is then also passed into a an embedding network to produce a
    vector $\c_i \in \L$.  Thus CLIP implicitly defines a function
    $g$ which maps captions to $\L$.
  \item For each image $I_i$, the probability distribution on captions
    describing $I_i$ is determined by taking a softmax of the cosine
    similiarities between each $\c_j$ and $\i_i$.  That is,

    \[
      p = \sm \left( 
      \begin{bmatrix}
        \frac{\i_i \cdot \c_1}{ \| i_i\| \| c_1 \|} \\
        \vdots  \\        
        \frac{\i_i \cdot \c_n}{ \| i_i\| \| c_n \|}
      \end{bmatrix} \right) 
  \]
  where $\sm(
  \begin{bmatrix}   x_1 & \ldots & x_n  \end{bmatrix}^T)$ is defined
  as $ \frac{1}{\sum_{i=1}^n e^{x_i}}  \begin{bmatrix}e^{x_1} & \ldots & e^{x_n} \end{bmatrix}^T$
                                                                    
  \smallskip

  \end{enumerate}

  We note that CLIP has been shown to be a very effective model for
  recognizing a wide variety of classes \cite{pmlr-v139-radford21a},
  and indeed its flexibility seems to offer tremendous potential for
  using CLIP as a basis for visual reasoning in the wild.   This
  promise largely underwrites the interest in this paper in studying
  CLIP as a basis for grounded logical reasoning.

\section{Related Work}
The genesis of this paper is a desire to combine the seemingly
grounded object recognition capabilities of CLIP with the power of
traditional logical reasoning.  This places the current work within
the body of literature studying visual reasoning using language model;
some prominent examples include Flamingo \cite{alayrac2022flamingo},
Visual Comet \cite{park2020visualcomet} and more
generally the large contemporary literature on visual question
answering and visual commonsense reasoning.  While our analysis takes
place in the context of CLIP, similar analyses can be done for other
vision-language systems.

Our particular emphasis here is on compositionality in such systems,
and in particular in our theoretical analysis we will focus on some
geometric considerations on the latent space of CLIP.  This ties the
current work to the literature on compositionality in symbolic systems
\cite{sep-compositionality}.  The geometric analysis finds some precedent in the
work of Peter G{\"a}rdenfors \cite{gardenfors2004conceptual}.

There is other work specifically dealing with compositionality in
language models and language-vision models.  For example,
\cite{helwe2021reasoning} shows that the BERT-like language models
often fail at logical reasoning, while \cite{subramanian2019analyzing}
raises the question of whether VQA models are truly compositional.

\section{Geometric Analysis}

The question in this paper is whether or not CLIP (or a variant
thereof) can support grounded logical reasoning.  As noted, the
explicit tying of the images and the captions which desribe them
provides some kind of grounding.  Indeed, we can view the behavior of
CLIP as providing a kind of logical interpretation function: for any
fixed image $i$ and description $d$, we can think of $d$ being true in
$i$ exactly when CLIP gives higher probability to $d$ than to $\neg
d$.   In classical logic, such an interpretation then extends to a
notion of truth in a model for arbitrarily complex combinations of
atomic descriptions.  Moreover, this notion of truth is exactly
reflected in the inference rules of classical logic -- this is precisely
the soundness and completeness of first-order (or propositional)
logic.  That is, starting with a consistent set of axioms and applying
inference rules not only yields logical consequences of the axioms,
but every such consequence can be attained this way.

We would like to examine whether the grounding obtained by viewing
CLIP as providing an interpretation function can be extended in a
similar vein.  To address this question, we first sharpen our
definitions and introduce some notation.  Suppose we are working with
some fixed set of image descriptions $\D$, and further that $\D$ is
the closure under some set of logical operations of some generating
set of atomic descriptions $\A$.  For example, $\A$ might be a set of
natural language descriptions of images as belonging to a set of basic
categories, containing strings like ``An image of a cat'' and ``An
image of a dog''.  Then closing under logical operators would add
descriptions like ``An image of a cat or a dog'' and ``Not an image of
a cat'' (we will formalize these notions below).  For any $d \in \D$,
let us say that $d$ is true of $i$ when $d$ accurately describes $i$,
and write this as $i \cmodels d$ \footnote{We recursively define
  $i \cmodels d$ for any $d \in \D$ in Definition \ref{defn:cmodels}
  below.}  .  We will write $i \cproves d$ when CLIP assigns higher
probability to $d$ than to $\neg d$ given the image set $\{\, i \,\}$
and the caption set $\{\, d, \neg d \,\}$.  In analogy with the
soundness and completeness of propositional calculus, our hope is that
$i \cmodels d$ exactly when $i \cproves d$.



\bigskip

\def\N{\mathcal{N}} We begin our discussion by defining our basic
syntax and semantics.  Throughout, we will subscript standard logical
notation with $_C$ (e.g. $\clor$) to indicate that we are working with
related but different notions.  Since CLIP uses natural language
descriptions, this is slightly more complicated than the siutation in
propostional calculus, where atomic propositions are combined in a
natural way.  Let $\N$ denote the set of all English language
descriptions.  We will fix functions
$\phi_\lor: \N \times \N \to \N, \phi_\land : \N \times \N \to \N$ and
$\phi_\neg: \N \to \N$ which determine how logical operators combine
descriptions.  For example, we might have
$\phi_\lor({\text{``An image of a cat", ``An image of a dog"}}) =
{\text{``An image of either a cat or a dog"}}$.  Given these functions
and given $d,e \in \A$, we will write $d \lor_C e$ for
$\phi_\lor(d,e)$, and similarly for $\land_C$ and $\neg_C$.  The
syntax and semantics of these operations are defined recursively in
the expected way.

\begin{defn} Let $\A$ be any set of descriptions, which we take to be
  atomic.  Then we define $\D_\A$, the set of {\it well-formed
    sentences generated by $\A$} as the set of strings of the form:
  \begin{itemize}
  \item $d$ for $d \in \A$; OR
  \item $\cneg(d)$ for well-formed $d \in \D$; OR
  \item $(d \clor e)$ for well-formed $d,e \in \D$; OR
  \item $(d \cland e)$ for well-formed $d,e \in \D$
  \end{itemize}
  When context makes them unnecessary, we will omit the parentheses in
  well-formed sentences (adopting the usual precedence conventions for
  logical operators).
\end{defn}

We now turn to the models under consideration; these are meant to have
the same basic structure as CLIP and use cosine similarity to measure
the closeness of embeddings.

\begin{defn} Let $\I$ denote any set of images and let $\D$ be as in
  convention \ref{conv:desc}.  Then a {\it CLIP-like} model is a pair
  of functions $(f,g)$ where for some Euclidean vector space $\L$, we
  have that $f: \I \to \L$ and $g: \D \to \L$.  That is, $f$ and $g$
  embed images and descriptions into a shared latent space.

  If $C$ is CLIP-like, then for $i \in \I, d \in \D$ we define
  $\alpha(i, d)$ to be the cosine similarity between the respective
  embeddings of $i$ and $d$.  
  \[
    \alpha_C(i, d) := \frac{ f(i) \cdot g(d)}{\| f(i)\|  \| g(d)\|}
  \]
\end{defn}

We will use this notion to recursively define the relation $i \cmodels d$.

\begin{defn}\label{defn:cmodels}   Let $C=(f,g)$ be CLIP-like.
  For any image  $i$ and well-formed $d,e \in \D_\A$:
  \begin{itemize}
  \item If $d \in \A$, then $i \cmodels d$ if and only if $C$ assigns
    higher probability to $d$ then to $\neg d$ when $C$ uses
    input image $\{\, i \,\}$ and caption set $\{\, d, \neg d \,\}$.
    That is, $i \cmodels d$ if and only if $\alpha_C(i,d) >
    \alpha_C(i, \neg_C d)$.  
  \item $i \cmodels \cneg d$ if and only if it is not the case that $i
    \cmodels d$.
  \item $i \cmodels d \clor e$ if and only if $i \cmodels d$ or $i
    \cmodels e$.
  \item $i \cmodels d \cland e$ if and only if $i \cmodels d$ and $i
    \cmodels e$
  \end{itemize}
\end{defn}

We want to introduce some modest constraints on the types of images
and their potential descriptions which we will consider.

\begin{defn}
  Let $\I$ be a set of images and let $\A$ be a set of strings which
  represents atomic descriptions of elements of $\I$, as above.  Let
  $C$ be CLIP-like.

  \begin{itemize}
  \item We say that {\it $\I$ is describable by $\A$} if for $i \in \I$,
    there is some $d \in \A$ such that $i \cmodels d$ and conversely
    for every $d \in A$ there is some $i \in \I$ such that $i \cmodels
    d$.  
  \item If $\I$ is describable by $\A$ we say that $(\I, \A)$ is {\it
      separable} if:
    \begin{itemize}
    \item For every $i \in \I$, there is some $e \in \A$ such that $i
      \cmodels \cneg e$.
    \item For every $e \in \A$, there is some $i \in \I$ such that $i
      \cmodels \cneg e$.
    \end{itemize}
    Separability prevents a situation where all descriptions are true
    of any one image or any one description is true of all images.
  \end{itemize}
\end{defn}

\begin{conv}\label{conv:desc}
For the remainder of this paper,  we fix a set of images $\I$ and a set of
strings $\A$ with $\I$ describable by $\A$ and $(\I, \A)$
separable. Further, we shall abbreviate $\D_\A$ simply as $\D$.

\end{conv}

In our analysis, the following observations will be used repeatedly

\begin{remark} Let $C$ be CLIP-like with latent space $\L$.  For any
  two points $a,b \in \L$, let
  $\chi(x,y) := \frac{a \cdot b}{\| a \| \| b \|}$ denote the cosine
  similartiy between $x$ and $y$
  \begin{enumerate}
  \item $\chi(a,b)$ is not defined for $a=0$ or $b=0$.  
  \item The cosine similarity of $a,b$ is maximal when the angle
    between $a,b$ is 0 and $\chi(a,b) = 1$; it is minimal when the
    angle is $\frac{\pi}{2}$ and $\chi(a,b) = -1$
  \item The set of all $c \in \L$ such $\chi(a,c) = 1$ forms a ray in
    $\L$ as does the set of all $c \in \L$ with $\chi(a,c) = -1$.
    Further, the latter ray is the reflection of the former ray over
    the origin.  We refer to the latter ray as the {\it anti-ray} of
    the former ray.  Since $\chi$ is not defined at the origin, any
    ray is disjoint from its anti-ray.
  \end{enumerate}
\end{remark}

We now define $C$-completeness  for a CLIP-like structure.

\begin{defn}
  If $C$ is CLIP-like, we say that
  \begin{itemize}
  \item $C$ {\it respects basic descriptions} if for every $i \in \I, d \in
    \A$, if $i \cmodels P$ then $\alpha_C(i, d) = 1$.  
  \item $C$ {\it respects negation} if for $i \in \I, d \in
    \A$, if $i \cmodels \cneg d$ then 
    $\alpha_C(i,d) = -1$
  \item $C$ {\it respects disjunction} if for $i \in \I, d,e \in
    \D$, if $i \cmodels d \clor e$, then    $\alpha_C(i, d \clor e) = 1$.
  \item $C$ {\it respects conjunction} if for $i \in \I, d,e \in
    \D$, if  $i \cmodels d \cland e$, then
    $\alpha_C(p, d \cland e) = 1$.
  \item $C$ is $C$-{\it complete} if it respects descriptions,
    negation, disjunction and conjunction.
  \end{itemize}
\end{defn}


A simple induction (on sentence length) shows that if $C$ is
$C$-complete, then for {\bf any} description $d \in \D$ and image
$i \in \I$, if $i \cmodels d$ then $\alpha_C(i, d) =1$ and, {\it a
  fortiori}, $i \cproves d$.  The idea here
is that the relation $i \cproves d$ represents some kind of notion of
$d$ being inferred by $C$.  In first-order logic, completeness means
that everything which can be inferred IS inferred.  
We would like something similar for $\cproves$ derivations
-- in particular the ability to derive all true descriptions of an
image.

What we will see is that this desire places extreme geometric
constraints on the embeddings $f(i), g(d)$.  In fact, assuming that
$C$-completeness holds, we will show that all descriptions which are
true of {\it any} image will necessarily live on a single ray, while
all descriptions which are false of any image will live on that ray's
anti-ray.  This will imply that all descriptions are on both the ray
and it's anti-ray, which is a contradiction since the two are
disjoint.

\begin{prop}\label{prop:main}
  If $C = (f,g)$ is CLIP-like then $C$ is not $C$-complete.
\end{prop}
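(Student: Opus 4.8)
The plan is to show that $C$-completeness forces all image embeddings onto a single ray, and then to use describability and separability to place the embedding of some atomic description simultaneously on that ray and on its anti-ray, contradicting the disjointness of a ray and its anti-ray noted in the Remark. I would prove the contrapositive-style statement directly: assume $C$ is $C$-complete and derive a contradiction (on nondegenerate $\I$, $\A$).

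First I would invoke the induction already sketched in the text: if $C$ is $C$-complete, then for every $d \in \D$ and every $i \in \I$, $i \cmodels d$ implies $\alpha_C(i,d)=1$. The crucial move is then to apply this to a \emph{tautology}. Fix any atomic $d \in \A$ (which exists by describability once $\I$ is nonempty) and set $\tau := d \clor \cneg d$. By Definition \ref{defn:cmodels}, for every image $i$ either $i \cmodels d$ or $i \cmodels \cneg d$, so $i \cmodels \tau$ holds for every $i \in \I$. Hence $\alpha_C(i,\tau)=1$ for all $i$ (this is immediate from $C$ respecting disjunction, and a fortiori from the induction), which by the Remark means every $f(i)$ lies on the ray determined by the single vector $g(\tau)$. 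Call this ray $R$; thus all image embeddings live on $R$.

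Next I would exploit the two halves of the hypotheses on $(\I,\A)$. Take any atomic $d \in \A$. By describability there is an image $i$ with $i \cmodels d$, so $\alpha_C(i,d)=1$ and $g(d)$ lies on the ray of $f(i)$; since $f(i) \in R$ this ray is exactly $R$, giving $g(d) \in R$. By separability there is an image $j$ with $j \cmodels \cneg d$, so because $C$ respects negation $\alpha_C(j,d)=-1$ and $g(d)$ lies on the anti-ray of $f(j)$; since $f(j) \in R$ this anti-ray is precisely the anti-ray $R^-$ of $R$, giving $g(d) \in R^-$. Thus $g(d) \in R \cap R^-$, which is empty because a ray and its anti-ray are disjoint --- the desired contradiction.

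The main obstacle, and the step I expect to carry the real weight, is the tautology argument that collapses all image embeddings onto one ray: it is what converts the per-image geometric constraints (each one only imposing cosine similarity $\pm 1$ relative to its own $f(i)$) into a single global ray shared by every image, after which describability and separability immediately clash. I would also check the degenerate edge cases --- that $\I$ and $\A$ are nonempty so that $\tau$ and the witnessing images $i,j$ exist, and that $g(\tau) \neq 0$ so that $\alpha_C(i,\tau)$ is actually defined and equal to $1$ --- since cosine similarity is undefined at the origin.
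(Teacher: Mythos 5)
Your proof is correct, and while it shares the paper's overall strategy --- collapse embeddings onto a single ray, then use describability and separability to force some $g(d)$ onto both that ray and its anti-ray --- your mechanism for the collapse is genuinely different. The paper's Lemma \ref{lem:1} bridges two image--description pairs $(i,d)$ and $(j,e)$ pairwise via the disjunction $d \clor e$: since $i \cmodels d \clor e$ and $j \cmodels d \clor e$, respecting disjunction places $f(i), f(j), g(d), g(e)$ on the common ray of $g(d \clor e)$, and describability is needed even at this stage so that each image satisfies some description. You instead use the single tautology $\tau = d \clor \cneg d$, which \emph{every} image satisfies by the semantics of Definition \ref{defn:cmodels} (each $i$ satisfies either $d$ or $\cneg d$), so all image embeddings land on the ray of $g(\tau)$ at once; describability and separability then enter only where they must, in the final clash. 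This buys two things. First, the collapse step becomes independent of describability and needs no pairwise bookkeeping, making the argument leaner than the paper's Lemma \ref{lem:1}--Lemma \ref{lem:2}--Corollary \ref{cor:1} chain. Second, and more importantly, you correctly observe that your key step needs only the ``respects disjunction'' clause rather than the induction sketched in the text; that is a real advantage, since the induction's negation case is actually questionable (respecting negation constrains $\alpha_C(i,d)$, not $\alpha_C(i, \cneg d)$, and nothing in the definitions relates $g(\cneg d)$ to $g(d)$), so an argument routed around it stands on firmer ground. The trade-off is that the paper's lemmas record slightly more reusable geometric information (all embeddings of descriptions true of any image lie on $\R$), whereas your proof extracts only what the contradiction requires. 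Your closing checks --- nonemptiness of $\A$ and $\I$, and that $g(\tau) \neq 0$ is forced because an undefined cosine similarity cannot equal $1$ --- address degenerate cases that the paper's own proof leaves implicit.
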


We will prove this via a sequence of lemmas.

\begin{lem}\label{lem:1}
  If $C = (f,g)$ is $C$-complete, then there
  exists a ray $\R$ in $\L$ such that $ f(\I) \subseteq \R$ and
  $g(\D) \subseteq \R$.
\end{lem}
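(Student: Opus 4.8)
The plan is to lean on the induction noted just above the statement: once $C$ is $C$-complete, $i \cmodels d$ already forces $\alpha_C(i,d)=1$, i.e.\ $f(i)$ and $g(d)$ point in the same direction and hence lie on a common ray. So the entire lemma reduces to showing that all of these rays coincide. I would first record that every embedding in play is nonzero: whenever $i \cmodels d$ the quantity $\alpha_C(i,d)$ is defined, and since $\chi$ is undefined at the origin, such $f(i)$ and $g(d)$ are nonzero, so ``the ray through each'' makes sense.

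First I would pin down the images. Given two images $i,j \in \I$, describability supplies atomic $a,b \in \A$ with $i \cmodels a$ and $j \cmodels b$. By the disjunction clause of Definition \ref{defn:cmodels}, $i \cmodels a \clor b$ (because $i \cmodels a$) and likewise $j \cmodels a \clor b$. The stated induction then gives $\alpha_C(i, a \clor b) = \alpha_C(j, a \clor b) = 1$, so both $f(i)$ and $f(j)$ lie on the single ray determined by $g(a \clor b)$, and therefore on the same ray as one another. As $i,j$ were arbitrary, all of $f(\I)$ shares one direction; I take $\R$ to be that common ray, giving $f(\I) \subseteq \R$.

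Next I would place the descriptions. For any $d \in \D$ that is true of some image $i$, the induction fact again yields $\alpha_C(i,d)=1$, so $g(d)$ lies on the ray through $f(i)$, which is $\R$. In particular, describability guarantees that every atomic $d \in \A$ is modelled by some image, so $g(\A) \subseteq \R$, and the disjunction trick immediately extends this to any disjunction some image satisfies. This disposes of every ``satisfiable'' description uniformly.

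The hard part is upgrading ``true of some image'' to all of $\D$. A sentence such as $a \cland \cneg a$ is false of every image, and the four ``respects'' clauses only constrain $g$ on descriptions that hold of some image (respecting negation, moreover, speaks only about atomic descriptions and places them on the \emph{anti}-ray, not on $\R$). I would therefore attempt a structural induction on $d$, handling $\clor$ and $\cland$ through their semantic clauses and using describability/separability to exhibit a witnessing image at each step; the genuine obstruction is precisely the always-false (dually, always-true) sentences, for which no witness exists and no ``respects'' clause applies. If that cannot be closed directly, I would fall back on the fact that the contradiction driving Proposition \ref{prop:main} needs only $g(\A) \subseteq \R$ together with the anti-ray placement of atomic descriptions coming from separability, so the operative content of the lemma is exactly the single common ray $\R$ constructed in the first two steps.
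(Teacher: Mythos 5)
Your core construction is the same as the paper's. The paper fixes $i,j \in \I$ together with witnesses $d,e \in \D$ satisfying $i \cmodels d$ and $j \cmodels e$, observes that $i \cmodels d \clor e$ and $j \cmodels d \clor e$, and then uses ``respects basic descriptions'' and ``respects disjunction'' to get $\alpha_C(i,d) = \alpha_C(i, d \clor e) = \alpha_C(j,e) = \alpha_C(j, d \clor e) = 1$, so that $g(d \clor e)$ acts as the bridge forcing $f(i), f(j), g(d), g(e)$ onto a single ray. Your disjunction trick with $a \clor b$ supplied by describability is exactly this argument, just phrased through the induction stated before the lemma rather than through the individual ``respects'' clauses.

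The obstruction you identify in your final paragraph is genuine, and you should know that the paper's own proof does not overcome it either: the paper only ever places $g(d)$ on $\R$ for descriptions $d$ that have a witnessing image, so its argument establishes the inclusion $g(\D) \subseteq \R$ only for satisfiable descriptions. Nothing in $C$-completeness constrains the embedding of an unsatisfiable sentence such as $a \cland \cneg a$, so the lemma as literally stated is over-claimed. Your fallback position is precisely how the paper proceeds: immediately after Lemma \ref{lem:2} it restates Lemma \ref{lem:1} in the weaker form ``if $d$ is true of \emph{any} $i \in \I$, then $g(d) \in \R$,'' and the proof of Proposition \ref{prop:main} invokes only this weaker statement (indeed only for atomic $d$), together with the anti-ray placement from separability. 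So your proposal matches the paper's argument in substance, and your diagnosis that the operative content is the common ray for satisfiable descriptions --- not all of $g(\D)$ --- is a correct reading of what is actually proved and used.
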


\begin{proof}
  Fix $i,j \in \I$ and $d,e \in \D$ such that $i \models d, j \models
  e$.  We will show that $f(i), f(j), g(d), g(e)$ are all on the same
  ray.  
  Since $C$ respects basic descriptions, we have
  \begin{equation}
    \label{eq:1}
    \alpha_C(i, d) = 1
  \end{equation}
  Since $i \cmodels d$, we have $i \cmodels d \clor e$ as well.  Since
  $C$ respects disjunctions, we have
  \begin{equation}
    \label{eq:2}
  \alpha_C(i, d \clor e) = 1
\end{equation}
Similarly, we have

\begin{align}
\label{eq:3}  \alpha_C(j, e) &= 1 \\   
\label{eq:4}  \alpha_C(j, d \lor e) &= 1  
\end{align}
By \eqref{eq:3} and \eqref{eq:4}, $f(j)$ is on the same ray as $g(d
\clor e)$, as is $g(e)$. By \eqref{eq:1} and \eqref{eq:2}, these are on
the same ray as $f(i)$ and $g(d)$.
\end{proof}

\def\a{\mathbf{A}}
\begin{lem}\label{lem:2}
  If $C = (f,g)$ respects descriptions, disjunctions and negations,
  then there exists a ray $\a$ in $\L$ such that for every
  $i \in \I, d \in \D$, if $i \models \neg d$ then $g(d) \in \a$.
  Moreover, $\a$ is the anti-ray of the ray $\R$ guaranteed by Lemmma
  \ref{lem:1}.
\end{lem}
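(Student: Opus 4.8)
The plan is to take $\a$ to be the anti-ray of the ray $\R$ produced by Lemma \ref{lem:1} and then show that every $g(d)$ with $d$ false of some image lands on it. By the Remark, once we fix any image $i$ with $f(i) \in \R$, a vector $v \in \L$ satisfies $\chi(f(i), v) = -1$ exactly when $v$ lies on the anti-ray of the ray through $f(i)$, which is precisely $\a$. Since $\I$ is describable by $\A$, every $i \in \I$ models some atomic description, so Lemma \ref{lem:1} already guarantees $f(i) \in \R$. Thus the whole lemma reduces to the single claim: if $i \cmodels \cneg d$, then $\alpha_C(i, d) = -1$; combined with $f(i) \in \R$ this immediately yields $g(d) \in \a$, and the ``moreover'' clause is then just the definition of $\a$.

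First I would dispatch the base case. If $d \in \A$ is atomic and $i \cmodels \cneg d$, then the hypothesis that $C$ respects negation gives $\alpha_C(i, d) = -1$ outright, so $g(d) \in \a$. Separability guarantees such a pair $(i,d)$ exists at all (every image has some atomic description false of it), so $\a$ is genuinely populated and well-defined as the reflection of $\R$ through the origin; its disjointness from $\R$ is recorded in the Remark and is what ultimately powers the contradiction in Proposition \ref{prop:main}.

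The main work, and the step I expect to be the real obstacle, is extending $\alpha_C(i, d) = -1$ from atomic $d$ to arbitrary well-formed $d \in \D$: respects-negation is only assumed at the atomic level, and the embedding $g(d)$ of a composite sentence carries no compositional structure that could be read off from $g$ of its subformulas. My plan here mirrors the disjunction technique of Lemma \ref{lem:1}, but applied to $\cneg d$ rather than to $d$. Since $i \cmodels \cneg d$, the sentence $\cneg d$ is itself true of $i$, so Lemma \ref{lem:1} places $g(\cneg d)$ on $\R$ alongside $f(i)$. I would then argue that $g(d)$ is forced to be antipodal to this common ray, exploiting the symmetry between $d$ and $\cneg d$ and the fact that no image can satisfy both; making this symmetry precise for composite $d$, rather than merely for the atomic witnesses supplied by respects-negation, is where the delicacy lies. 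With that in hand, $\chi(f(i), g(d)) = -1$ for every $d$ false of $i$, every such $g(d)$ lies on $\a$, and the lemma follows.
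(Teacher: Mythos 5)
Your first paragraph and your base case coincide exactly with the paper's proof: take $\mathbf{A}$ to be the anti-ray of the ray $\mathbf{R}$ from Lemma \ref{lem:1}; by Lemma \ref{lem:1} together with describability, $f(i) \in \mathbf{R}$; respects-negation gives $\alpha_C(i,d) = -1$; hence $g(d) \in \mathbf{A}$. That is the entire content of the paper's argument, which is three lines long and makes no distinction between atomic and composite $d$.

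The gap is in the step you yourself flag as ``the real obstacle.'' Your plan --- use Lemma \ref{lem:1} to place $g(\neg_C d)$ on $\mathbf{R}$ and then argue that $g(d)$ is forced to be antipodal --- cannot be carried out. Nothing in the hypotheses relates the embedding $g(d)$ to the embedding $g(\neg_C d)$ when $d$ is composite: the only clauses tying a sentence to its negation are the atomic clause of Definition \ref{defn:cmodels} and the respects-negation clause, which is stated only for $d \in \mathcal{A}$, and, as you note yourself, $g$ carries no compositional structure. Concretely, take $d = e \land_C h$ with $e, h$ atomic and no image satisfying both; then $i \models_C \neg_C d$ for every $i$, yet none of respects-descriptions, respects-disjunctions, or respects-negation mentions $g(e \land_C h)$ at all, so no local argument can place it on $\mathbf{A}$. (The composite case does hold vacuously, since the hypotheses turn out to be globally inconsistent --- but establishing that inconsistency is precisely Proposition \ref{prop:main}, so appealing to it here would be circular.) What you have actually uncovered is a scope mismatch in the paper: respects-negation is defined only for $d \in \mathcal{A}$, while Lemma \ref{lem:2} quantifies over $d \in \mathcal{D}$, and the paper's own proof silently applies respects-negation to arbitrary $d$. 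The correct repair is not a cleverer induction but a change of scope: either read the definition of respects-negation as applying to all of $\mathcal{D}$, in which case your base case is already the whole proof, or restrict the lemma to $d \in \mathcal{A}$ --- which is all that is ever used, since the proof of Proposition \ref{prop:main} invokes Corollary \ref{cor:1} only for a fixed atomic $d \in \mathcal{A}$.
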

\begin{proof}
By Lemma \ref{lem:1}, $f(i) \in \R$.  Since $C$ respects negations,
$\alpha_C(p, d) = -1$; thus $g(d) \in \a$.
\end{proof}

Fix $\R$ as in Lemma \ref{lem:1} and $\a$ as in Lemma \ref{lem:2}.
We note that Lemma \ref{lem:2} showed that if $d$ is false for {\it
  any} $i \in \I$, then $g(d)
\in \a$.   We also know from Lemma \ref{lem:1} that if $d$ is true
{\it any} $i \in I$, then $g(d) \in \R$.
\begin{cor}\label{cor:1}
  For $C = (f,g)$ as above and any $d \in \D$,
  \begin{enumerate}
  \item If there is any $i \in \I$ for which $i \cmodels \cneg d$, then
    $g(d) \in \a$.
  \item If there exist $p, q \in \I$ such that $p \cmodels d, q
    \cmodels \neg_C d$, then $g(d) \in \R \cap \a$
  \end{enumerate}
\end{cor}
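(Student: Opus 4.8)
The plan is to obtain both parts as immediate consequences of Lemmas~\ref{lem:1} and~\ref{lem:2}, since each clause of the corollary simply records which of the two rays the embedding $g(d)$ must lie on, according to whether $d$ is witnessed true or witnessed false by some image. No new geometric work is needed beyond correctly triggering the hypotheses of the two lemmas on the right objects.

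For part~(1), I would take the image $i$ supplied by the hypothesis, for which $i \cmodels \cneg d$. Since $C$ is $C$-complete it in particular respects descriptions, disjunctions, and negations, so the hypotheses of Lemma~\ref{lem:2} are met. Applying that lemma to this specific pair $(i,d)$ yields directly that $g(d) \in \a$, which is exactly the claim; part~(1) is thus essentially a specialization of Lemma~\ref{lem:2}.

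For part~(2), I would treat the two witnesses separately. From $p \cmodels d$, the argument of Lemma~\ref{lem:1} (equivalently its conclusion $g(\D) \subseteq \R$) places $g(d)$ on the ray $\R$, since a description true of some image embeds onto the common ray shared by all true descriptions and all image embeddings. From $q \cmodels \cneg d$, part~(1) — or Lemma~\ref{lem:2} applied to $(q,d)$ — places $g(d)$ on the anti-ray $\a$. Conjoining the two memberships gives $g(d) \in \R \cap \a$, as required.

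The only point that needs care, and the closest thing to an obstacle here, is ensuring the two lemmas are invoked on the correct objects: Lemma~\ref{lem:1} must be read as asserting that any description true of some image lands on $\R$ (so the witness $p$ suffices), while Lemma~\ref{lem:2} asserts that any description false of some image lands on $\a$ (so the witnesses $q$, resp.\ $i$, suffice). Once these readings are fixed, both parts are one-line applications, and part~(2) is precisely the conjunction of the two containments. Since $\R$ and $\a$ are disjoint by the Remark, this conjunction is exactly the tension that will drive the final contradiction in Proposition~\ref{prop:main}.
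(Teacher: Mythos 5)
Your proposal is correct and matches the paper's own treatment: the paper gives no separate proof for this corollary, regarding it as immediate from the two observations stated just before it --- that Lemma~\ref{lem:2} places $g(d)$ on $\a$ whenever $d$ is false of some image, and Lemma~\ref{lem:1} places $g(d)$ on $\R$ whenever $d$ is true of some image --- which is exactly how you decompose parts (1) and (2). Your remark about reading Lemma~\ref{lem:1} as applying to descriptions witnessed true by some image is also the reading the paper itself adopts in the sentences preceding the corollary.
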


Finally we prove our main result.

\begin{proof}[Proof of Proposition \ref{prop:main}]
  Suppose, by way of contradiction, that $C = (f, g)$ is CLIP-like and
  $C$-complete.  Fix $d \in \A$.  Since $\I$ is describable by $\A$,
  there exists some $i \in \I$ such that $i \models d$.  By Lemma
  \ref{lem:1}, $f(i) \in \R$ and $g(d) \in \R$.  Using separability
  and describability, choose $j \in \I$ and $e \in \A$ such that
  $j \models \neg d \land e$.  Since $j \models \neg d$, we have
  $g(d) \in \A$ as well by Corollary \ref{cor:1}.  However, we have
  that $\alpha_C(i, d) = 1$ so that $f(i) \in \a$ as well.  Thus
  $\{\, f(i), g(d) \,\} \subseteq \R \cap \a$. But
  $\R \cap \a = \emptyset$, contradicting that $f,g$ are maps into
  $\L$.
\end{proof}

\section{Conclusions and Future Work}

Our theoretical analysis shows no matter how accurately a CLIP-like
model detects basic categories, this cannot extend to arbitrary
boolean combinations.  This is by no means a fatal blow to the use of
CLIP in compostional reasoning; indeed two immediate possibilities
present themselves.  The most obvious is simply to use CLIP for basic
category recognition and deploy an external system to handle
composition of categories.  For example, to determine whether an image
is described by $P \land \neg (Q \lor R)$ we can simply run CLIP to
recognize each of the categories $P, Q, R$ and combine the results
using something like fuzzy logic.  In a sense, this is a validation of
using hybrid neurosymbolic models rather than trying to reason within
a vision-language model.

That said, there are reasons it would be desirable to work in a visual
reasoning system in which the latent space was organized in a
logically coherent manner as discussed in the paper.  While this
cannot be done for a CLIP-like system using cosine similarity, it is
still open whether a similar system organized according to Euclidean
distance or some other metric might work.   This will be explored in
future work.

We also note that the results in this paper represent a kind of
limiting case -- the question of logical coherence basically amounts
to asking what the geometry of a perfectly coherent system would look
like.   We have not addressed what the possibilities would be for a
system where the requirement of perfection was relaxed, for example by
having $i \cmodels d$ correspong to $\alpha_C(i, d) \geq 1 - \epsilon$
for some small $\epsilon > 0$.  This too will be explored in future
work.

\bibliography{paper}

\end{document}